\documentclass{article} 
\usepackage{iclr2026_conference,times}


\usepackage{amsmath,amsfonts,bm}









\def\eqref#1{equation~\ref{#1}}









\def\1{\bm{1}}










\DeclareMathAlphabet{\mathsfit}{\encodingdefault}{\sfdefault}{m}{sl}
\SetMathAlphabet{\mathsfit}{bold}{\encodingdefault}{\sfdefault}{bx}{n}













\usepackage{hyperref}
\usepackage{url}
\usepackage[utf8]{inputenc} 
\usepackage[T1]{fontenc}    
\usepackage{booktabs}       
\usepackage{amsfonts}       
\usepackage{nicefrac}       
\usepackage{microtype}      
\usepackage{xcolor}         
\usepackage{amsmath,amssymb}
\usepackage{amsthm}
\usepackage{enumitem} 
\usepackage{comment}  
\usepackage{tcolorbox}   
\usepackage{graphicx}  
\usepackage{algorithm}
\usepackage{algpseudocode} 
\usepackage{colortbl}
\usepackage{multirow} 
\usepackage{enumitem}
\usepackage{caption}
\usepackage{wrapfig}

\usepackage{titlesec}
\titlespacing*{\section}{0pt}{0.1\baselineskip}{0.1\baselineskip}
\titlespacing*{\subsection}{0pt}{0.1\baselineskip}{0.1\baselineskip}
\titlespacing*{\subsubsection}{0pt}{0.1\baselineskip}{0.1\baselineskip}

\definecolor{myblue}{RGB}{230, 245, 255}
\definecolor{mygray}{RGB}{240, 240, 240}

\theoremstyle{definition}
\newtheorem{definition}{Definition}[section]

\theoremstyle{plain}
\newtheorem{theorem}{Theorem}[section]

\title{Cross-modal RAG: Sub-dimensional Text-to-Image Retrieval-Augmented Generation}


\author{
    \textbf{Mengdan Zhu}$^{1}$,
    \textbf{Senhao Cheng}$^{2}$,
    \textbf{Guangji Bai}$^{1}$,
    \textbf{Yifei Zhang}$^{1}$,
    \textbf{Liang Zhao}$^{1}$ \\
    $^{1}$Emory University \quad
    $^{2}$University of Michigan, Ann Arbor \\
    \texttt{\{mengdan.zhu, guangji.bai, yifei.zhang2, liang.zhao\}@emory.edu} \\
    \texttt{senhaoc@umich.edu}
}

%

\iclrfinalcopy 
\begin{document}

\maketitle

\begin{abstract}
Text-to-image generation increasingly demands access to domain-specific, fine-grained, and rapidly evolving knowledge that pretrained models cannot fully capture, necessitating the integration of retrieval methods. Existing Retrieval-Augmented Generation (RAG) methods attempt to address this by retrieving globally relevant images, but they fail when no single image contains all desired elements from a complex user query. We propose Cross-modal RAG, a novel framework that decomposes both queries and images into sub-dimensional components, enabling subquery-aware retrieval and generation. Our method introduces a hybrid retrieval strategy—combining a sub-dimensional sparse retriever with a dense retriever—to identify a Pareto-optimal set of images, each contributing complementary aspects of the query. During generation, a multimodal large language model is guided to selectively condition on relevant visual features aligned to specific subqueries, ensuring subquery-aware image synthesis. Extensive experiments on MS-COCO, Flickr30K, WikiArt, CUB, and ImageNet-LT demonstrate that Cross-modal RAG significantly outperforms existing baselines in 
the retrieval and further contributes to generation quality, while maintaining high efficiency.
\end{abstract}

\section{Introduction}
\begin{wrapfigure}{r}{0.45\textwidth}
\vspace{-4mm}
\centering
    \includegraphics[scale=0.083]{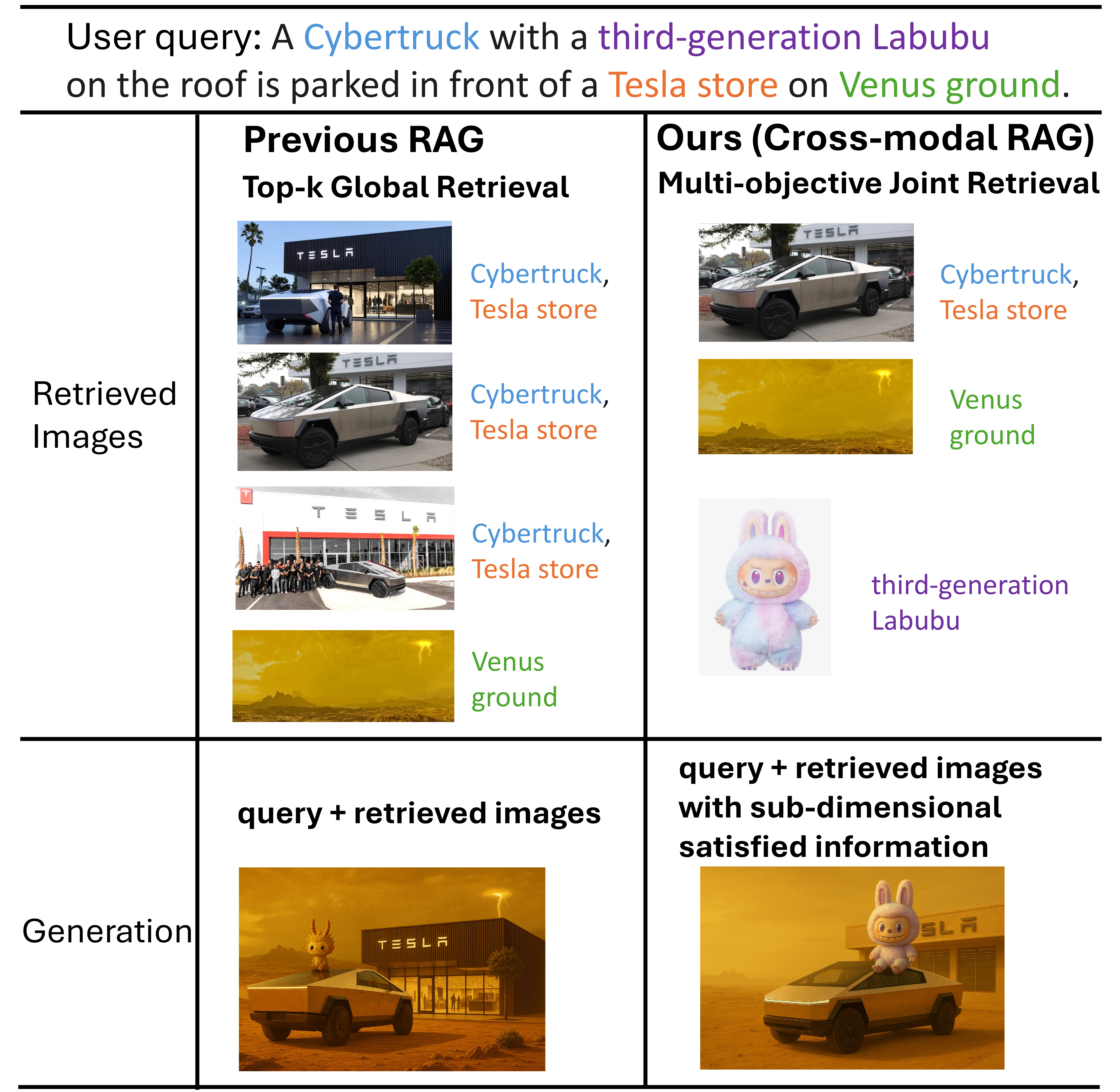}
  \vspace{-6mm}
  \caption{Visualization of retrieval and generation in Cross-modal RAG (ours) versus previous RAG. }
  \label{fig: introduction fig}
  \vspace{-4mm}
\end{wrapfigure}
Text-to-Image Generation (T2I-G) has witnessed rapid progress in recent years, driven by advances in diffusion models~\citep{rombach2022high,saharia2022photorealistic,nichol2021glide} and multimodal large language models (MLLMs)~\citep{openai2025gpt4oimage,google2025gemini2flash,jin2023unified}, enabling the synthesis of increasingly realistic and diverse images from natural language descriptions. However, in many real-world applications, domain-specific image generation requires knowledge that is not readily encoded within pre-trained image generators, especially when such information is highly long-tailed, fast-updated, and proprietary. To address this limitation, Retrieval-Augmented Generation (RAG) has emerged as a promising paradigm by incorporating an external image database to supply factual reference during generation~\citep{zheng2025retrieval,zhao2024retrieval}. Notable RAG-based image generation approaches such as Re-Imagen~\citep{chen2022re}, RDM~\citep{blattmann2022retrieval}, and KNN-Diffusion~\citep{sheynin2022knn} integrate retrieved images with diffusion models to improve output fidelity. However, these existing RAG methods typically rely on off-the-shelf retrievers (e.g., those based on CLIP~\citep{radford2021learning}) which compute global image-text similarities and retrieve whole images based on the full user query. This coarse-grained retrieval strategy often fails in complex scenarios where the query involves multiple fine-grained entities or attributes~\citep{varma2023villa} -- especially when no single image contains all required components in the query. In practice, it is extremely common that single images in the retrieval database only satisfy a subset of the query.  As in Fig.~\ref{fig: introduction fig}, no single image in the retrieval database perfectly covers all four aspects in the query; instead, each covers different subsets of the query. Existing RAG methods often retrieve top-k images based on the entire query, so images that redundantly contain most aspects of the query tend to be retrieved (e.g., three images that all include “Cybertruck” and “Tesla store”), while some aspects may be underweighted (e.g., “Venus ground”) or even missed (e.g., “third-generation Labubu”), leading to the distortion in the missed aspects. Also, during generation, existing RAG has not been precisely instructed about which aspects of each image should be leveraged, resulting in the superfluous lightning in the generated image by previous RAG.

Therefore, instead of being restricted to considering whether each whole image is related to the entirety of the query, it is desired to advance to pinpointing which aspects (i.e., sub-dimensions) of which images can address which aspects of the query for image generation. Such desire is boiled down to several open questions. \emph{First, how to precisely gauge which part of the queries match which aspects of each image?} Existing global embedding methods, like CLIP, do not naturally support sub-dimensional alignment~\citep{radford2021learning}, and current fine-grained vision-language matching is limited to region-level object patterns~\citep{varma2023villa,zhong2022regionclip}, which are computationally expensive and error-prone. \emph{Second, how to retrieve the smallest number of images that cover all necessary information?} It is desired to retrieve an optimal set of images such that each covers different aspects of the query while avoiding redundancy in order to maximize the amount of relevant information under a limited context window size. \emph{Third, how to precisely inform the image generator of what aspect of each image to refer to when generating?} Existing image generators typically can take in the input query or images, but here it requires adding fine-grained instructions about how to leverage relevant aspects of images when generating, which is not well explored. 

To address these open problems, we propose \emph{Cross-modal Sub-dimensional Retrieval Augmented Generation (Cross-modal RAG)}, a novel text-to-image retrieval-augmented framework that can identify, retrieve, and leverage image sub-dimensions to satisfy different query aspects: 
\begin{itemize}[leftmargin=1.5em,labelsep=0.5em]
    \item To decompose and identify key image sub-dimensions, we decompose the user query into subqueries and candidate images into sub-dimensional representations with respect to the subqueries, enabling accurate subquery-level alignment. 
    \item To retrieve comprehensive and complementary image sub-dimensions, we formulate the retrieval goal as a multi-objective optimization problem and introduce an efficient hybrid retrieval strategy -- combining a lightweight sub-dimensional sparse retriever with a sub-dimensional dense retriever -- to retrieve a set of Pareto-optimal images that collectively cover all subqueries in the query as in Fig.~\ref{fig: introduction fig} (right). 
    \item To effectively instruct image generators with the retrieved image sub-dimensions, we present a model-agnostic and subquery-aware generation with MLLMs, which explicitly preserves and composes the subquery-aligned components from the retrieved images into a coherent final image. For instance, our method only preserves the “Venus ground” in the final image, while previous RAG can also preserve the irrelevant lightning in Fig.~\ref{fig: introduction fig}. 
\end{itemize}

Extensive experiments demonstrate that Cross-modal RAG achieves state-of-the-art performance in the text-to-image retrieval and also benefits generation tasks across multiple fine-grained, domain-specific, and long-tailed image benchmarks, while maintaining excellent computational efficiency.

\section{Related Work}
\subsection{Text-to-Image Generation}
Text-to-Image Generation (T2I-G) has made significant strides, evolving through methodologies such as Generative Adversarial Networks (GANs)~\citep{goodfellow2014generative,brock2018large}, auto-regressive models~\citep{van2016pixel,ramesh2021zero}, and diffusion models~\citep{ho2020denoising,nichol2021improved}. Recent breakthroughs in diffusion models and multimodal large language models (MLLMs), driven by scaling laws~\citep{kaplan2020scaling}, have significantly advanced the capabilities of T2I-G. Notable examples include the DALL-E series~\citep{ramesh2021zero,betker2023improving}, the Imagen series~\citep{saharia2022photorealistic}, and the Stable Diffusion (SD) series~\citep{rombach2022high,podell2023sdxl,esser2024scaling}. More recently, image generation functionalities have been integrated directly into advanced MLLMs such as GPT Image~\citep{openai2025gpt4oimage} and Gemini Image Generation~\citep{google2025gemini2flash}.
However, despite these advancements, traditional T2I-G methods often struggle with knowledge-intensive, long-tailed, and fine-grained image-generation tasks. These scenarios typically require additional context to generate accurate images, necessitating RAG techniques. 

\subsection{Text-to-Image Retrieval}
Text-to-Image Retrieval (T2I-R) has become a crucial subtask in supporting fine-grained image understanding and generation. CLIP~\citep{radford2021learning} is currently the most widely adopted approach, mapping images and texts into a shared embedding space via contrastive learning. While CLIP excels at coarse-grained alignment, it underperforms in fine-grained text-to-image retrieval, especially in scenes involving multiple objects or nuanced attributes. ViLLA~\citep{varma2023villa} explicitly highlights this limitation, demonstrating that CLIP fails to capture detailed correspondences between image regions and textual attributes. SigLIP~\citep{zhai2023sigmoid}, along with other refinements such as FILIP~\citep{yao2021filip} and SLIP~\citep{mu2022slip}, improves CLIP's contrastive learning framework and achieves superior zero-shot classification performance. However, these methods still rely on global image-text embeddings, which are inadequate for resolving localized visual details required by fine-grained queries.

To address this, recent works on fine-grained text-to-image retrieval (e.g., ViLLA~\citep{varma2023villa}, RegionCLIP~\citep{zhong2022regionclip}) have adopted region-based approaches that involve cropping image patches for localized alignment. In contrast, our vision-based sub-dimensional dense retriever bypasses the need for explicit cropping. By constructing sub-dimensional vision embeddings directly from the full image, we enable more efficient and effective matching against subqueries.

\subsection{Retrieval-Augmented Generation}
Retrieval-Augmented Generation has demonstrated significant progress in improving factuality for both natural language generation~\citep{gao2023retrieval,zhu2023large} and image generation~\citep{chen2022re,yasunaga2022retrieval}. Most RAG-based approaches for image generation are built upon diffusion models (e.g., Re-Imagen~\citep{chen2022re}, RDM~\citep{blattmann2022retrieval}, KNN-Diffusion~\citep{sheynin2022knn}), but these methods largely overlook fine-grained semantic alignment. FineRAG~\citep{yuan2025finerag} takes a step toward fine-grained image generation by decomposing the textual input into fine-grained entities; however, it does not incorporate fine-grained decomposition on the visual side. In contrast, our approach performs dual decomposition: (i) the query is parsed into subqueries that capture distinct semantic components, and (ii) the candidate images are decomposed into sub-dimensional vision embeddings aligned with the corresponding subqueries. Furthermore, while existing RAG-based image models typically rely on off-the-shelf retrievers, we introduce a novel retrieval method that combines a sub-dimensional sparse filtering stage with a sub-dimensional dense retriever. Finally, with the recent surge of MLLM-based image generation, we explore how our fine-grained retrieval information can be integrated to guide generation at the sub-dimensional level.

\section{Proposed Method}
\label{sec:method}

\begin{figure}[t]
  \begin{center}
    \includegraphics[width=\textwidth]{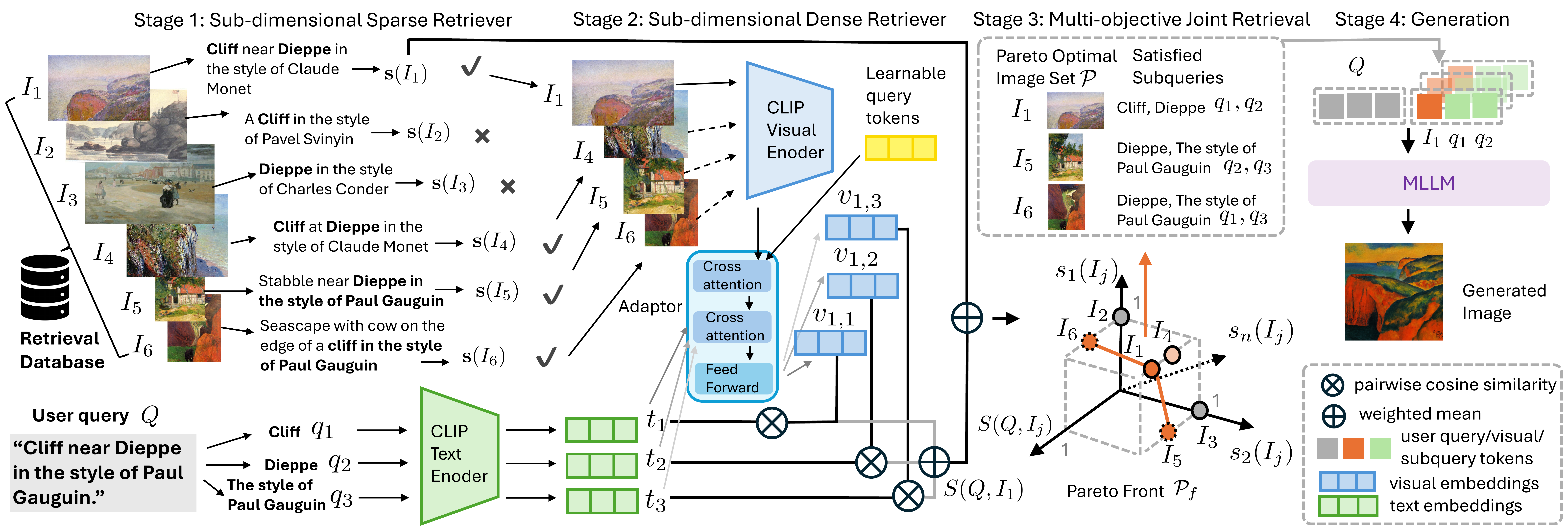}
  \end{center}
  \vspace{-5mm}
  \caption{Overview of the \emph{Cross-modal RAG} framework. The framework consists of four stages: (1) Sub-dimensional Sparse Retriever, where images are filtered based on lexical subquery matches; (2) Sub-dimensional Dense Retriever, where candidate images are re-ranked using the mean of pairwise cosine similarities between sub-dimensional vision embeddings and subquery embeddings; (3) Multi-objective Joint Retrieval, where a Pareto-optimal set of images is selected by Eq.\ref{objective} to collectively cover the subqueries. $\mathcal{P}_f$ is composed of three orange points(\includegraphics[height=0.5em]{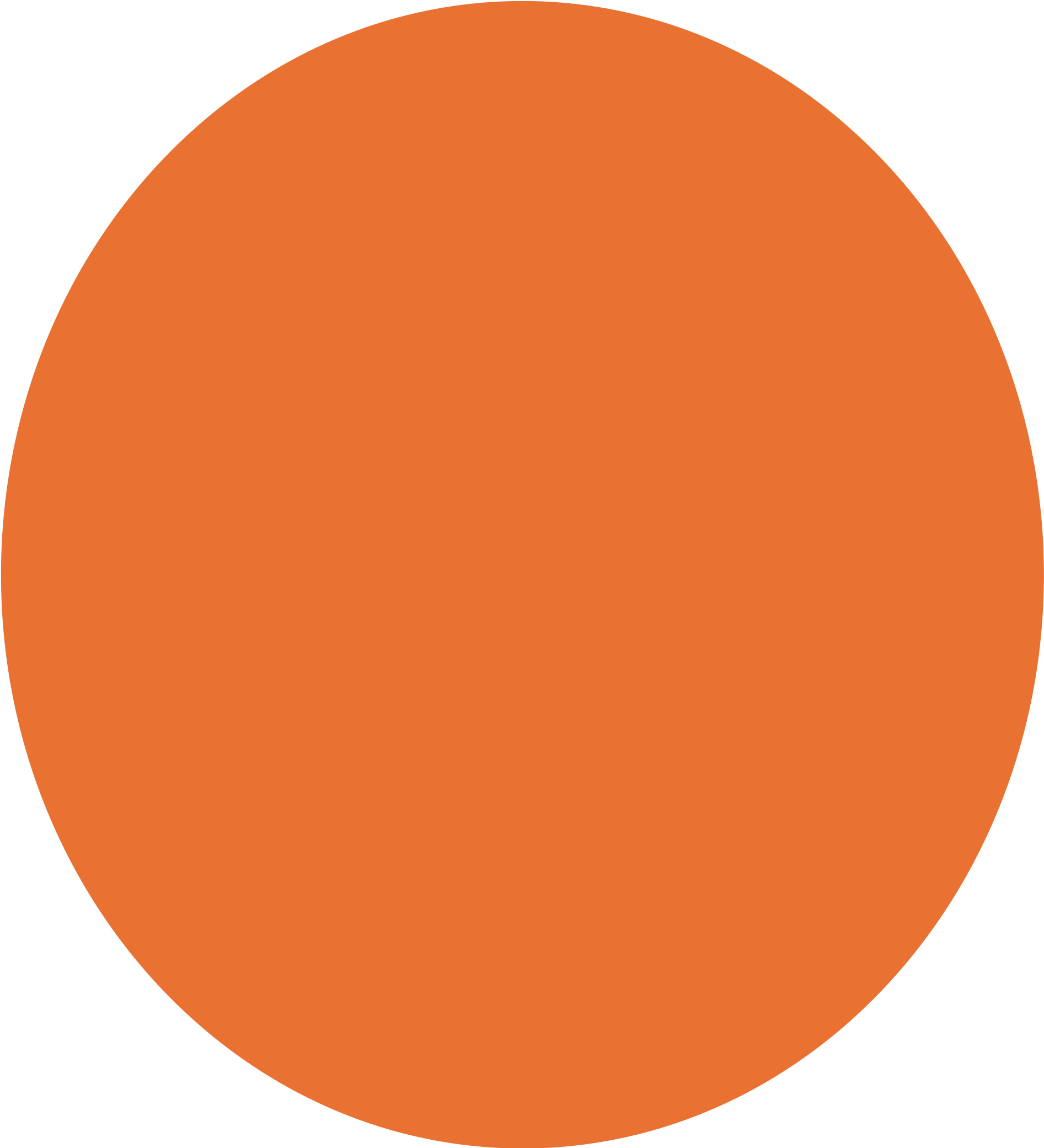}) (solid points are on the line while dashed points are off the line); and (4) Generation, where a MLLM composes a final image by aligning subquery-level visual components from retrieved images.}
  \label{fig: main fugure}
  \vspace{-4mm}
\end{figure}

We introduce \emph{Cross-modal RAG}, as shown in Figure~\ref{fig: main fugure}. The framework consists of four stages: (1) Sub-dimensional sparse retriever based on lexical match on subqueries in Sec.~\ref{sec:sparse}; (2) Sub-dimensional dense retriever based on semantic match on sub-dimensional vision embeddings and textual subquery embeddings  in Sec.~\ref{sec:dense}; (3) Multi-objective joint retrieval to select a set of Pareto-optimal images in Sec.~\ref{sec:joint}; and (4) Subquery-aware image generation with retrieved images in Sec.~\ref{sec:generation}.

The framework of \emph{Cross-modal RAG} focuses on: 1) how to \textbf{retrieve} the \emph{optimal} images from the retrieval database given multiple subqueries, and 2) how to \textbf{guide} the generator to generate images preserving the \emph{satisfied} subquery features in each retrieved image.

\subsection{Multi-objective Retrieval for Image Generation}
\subsubsection{Sub-dimensional Dense Retriever}
\label{sec:dense}
Given a user query $Q$ and a candidate image $I_j$, we decompose $Q$ into a set of subqueries $\{q_1, q_2, ..., q_n\}$, where each subquery $q_i$ captures a specific aspect of $Q$, such as object categories or attributes, and we further compute the similarity scores between its normalized sub-dimensional vision embeddings and textual subquery embeddings as follows:
\begin{equation}
S(Q, I_j) = \frac{1}{n}\sum_{i=1}^{n} sim(v_{ji}, t_{i}).
\label{eq:cosine_sim}
\end{equation}
Here, the similarity score $sim$ is cosine similarity. The similarity scores are aggregated across $n$ subqueries to form an overall similarity metric $S(Q, I_j)$. Images are ranked based on their similarity $S(Q, I_j)$ for the given query, and the top-ranked images are retrieved.

The subquery embeddings $t_i$ with respect to subquery $q_i$ can be computed as:
\begin{equation}
t_i=\Phi_{\text {clip-t}}(\mathrm{~g}(q_i)),
\end{equation}
where $\mathrm{g}(\cdot)$ denotes the tokenization, and $\Phi_{\text {clip-t}}(\cdot)$ denotes the pre-trained CLIP text encoder.

In terms of the sub-dimensional vision embeddings $v_{ji}$, after the image $I_j$ is fed into a pretrained CLIP vision encoder $\left(\Phi_{\text{clip-v}}(\cdot)\right)$, a multi-head cross-attention module is introduced, functioning as the vision adapter $f_a$, to compute fine-grained sub-dimensional vision subembeddings:
\begin{equation}
v_{ji}=f_a\left(\Phi_{\text{clip-v}}(I_j), T_{i}\right).
\end{equation}
The vision adapter consists of: 1) A multi-head vision cross-attention layer where the learnable query tokens attend to the vision embeddings extracted from the frozen CLIP visual encoder; 2) A multi-head text cross-attention layer where the output of the vision cross-attention further attends to the subquery embeddings extracted from the frozen CLIP text encoder; 3) An MLP head that maps the attended features to a shared multimodal embedding space, followed by layer normalization. The output $v_{ji}$ represents $I_j$'s ith-dimensional vision embedding corresponding to the core concept of subquery $q_i$, which is decomposed from $Q$ and can be obtained by an off-the-shelf LLM (e.g. GPT-4o mini) using the structured prompt in Appendix~\ref{app:query_prompt}. $T_i$ denotes the embedding of the core concept extracted from $q_i$, representing the object category without attribute modifiers.

  The vision adapter $f_a$ is optimized using the Info-NCE loss:
\begin{equation}
    \mathcal{L}_{\text {Info-NCE}}=-\log \frac{\sum_{(v_{ji}, t_{i}) \in \mathcal{P}}\exp \left(\frac{\left\langle v_{ji}^T, t_{i}\right\rangle}{\tau}\right)}{\sum_{(v_{ji}, t_{i}) \in \mathcal{P}}\exp\left(\frac{\left\langle v_{ji}^T, t_{i}\right\rangle}{\tau}\right)+\sum_{\left(v_{ji}^{\prime}, t_{i}^{\prime}\right) \sim \mathcal{N}}\exp \left(\frac{\left\langle v_{ji}^{\prime T}, t_{i}^{\prime}\right\rangle}{\tau}\right)},
\end{equation}
where $\mathcal{P}$ is a set of positive pairs with all sub-dimensional vision embeddings and subquery embeddings, $\mathcal{N}$ conversely refers to an associated set of negative pairs. $\tau$ is a temperature parameter.

\subsubsection{Sub-dimensional Sparse Retriever}
\label{sec:sparse}
\begin{definition}[Sub-dimensional Sparse Retriever]
For each retrieval candidate image $I_j$, we define a binary satisfaction score for the sub-dimensional sparse retriever:
\begin{equation}
s_i\left(I_j\right)= \begin{cases}1, & \text { if the caption of } I_j \text { contains } q_i \\ 0, & \text { otherwise }\end{cases}
\end{equation}
Hence, each image $I_j$ yields an $n$-dimensional satisfaction vector $\left[s_1\left(I_j\right), \ldots, s_n\left(I_j\right)\right]$.
\end{definition}

\begin{definition}[Image Dominance]
Consider two images $I_a$ and $I_b$ from the retrieval database $\mathcal{D}$, with corresponding subquery satisfaction vectors $\mathbf{s}\left(I_a\right)=\left[s_1\left(I_a\right), \ldots, s_n\left(I_a\right)\right]$ and $\mathbf{s}\left(I_b\right)=\left[s_1\left(I_b\right), \ldots, s_n\left(I_b\right)\right]$. We say $I_a$ \emph{dominates} $I_b$ , denoted $I_a \succ I_b$, if:

\textbf{(1)} $s_i\left(I_a\right) \geq s_i\left(I_b\right)$, $\forall\;i \in\{1, \ldots, n\}$, and \textbf{(2)} $\exists\; j \in\{1, \ldots, n\}$ s.t. $s_j\left(I_a\right)>s_j\left(I_b\right)$.

That is, $I_a$ is never worse in any subquery's score and is strictly better in at least one subquery. $I_a \in \mathcal{D}$ is retrieved by the sub-dimensional sparse retriever if there exists no other image $I_b \in \mathcal{D}$ such that $I_b$ dominates $I_a$. Formally, $\nexists I_b \in \mathcal{D}\text {  s.t.  } I_b \succ I_a$.

\end{definition}

\subsubsection{Multi-objective Optimization Formulation and Algorithm}
\label{sec:joint}
Each subquery can be regarded as a distinct objective, giving rise to a multi-objective optimization problem for retrieval. Our primary goal is to select images that collectively maximize text-based subquery satisfaction (sub-dimensional sparse retrieval), while also maximizing fine‐grained vision-based similarity (sub-dimensional dense retrieval). Thus, the overall objective is formalized as: 
\begin{equation}
\label{objective}
    I_j^{*} = \arg\max_{I_j} \sum_{i=1}^{n}  \alpha_i s_i(I_j) + \beta \cdot n S(Q, I_j),
    \; \text{s.t.}\;
    \forall \alpha_i: \alpha_i >0,\ 
    \sum_{i=1}^{n} \alpha_i = 1,\; 
    \beta\in (0,\beta_{\max}),
\end{equation}
where $\alpha_i$ is the relative importance of each subquery $q_i$ in the sub-dimensional sparse retrieval, and the weight $\beta$ trades off between the sub-dimensional sparse and dense retrieval. 

\begin{definition}[Pareto Optimal Images]
The solution to Eq.\ref{objective} is called the set of \emph{Pareto optimal images}, such that $I_j^{*}$ is not dominated by by any other image $I_k \in \mathcal{D}$ in terms of both $\mathbf{s}(I)$ and $S(Q,I)$. Formally,
\begin{equation}
\mathcal{P}=\{I_j^{*} \in \mathcal{D} \mid \nexists I_k \in \mathcal{D}\text {  s.t.  } F(I_k)>F(I_j^{*})\},
\end{equation}
where $F(I_j)=\sum_{i=1}^{n}  \alpha_i s_i(I_j) + \beta \cdot n S(Q, I_j), \; \text{s.t.}\;  \forall \alpha_i: \alpha_i >0,\ 
    \sum_{i=1}^{n} \alpha_i = 1,\; 
    \beta\in (0,\beta_{\max}).$
\end{definition}

\begin{definition}[Pareto Front of the Pareto Optimal Images]
$\mathcal{P}$ is sometimes referred to as the \emph{Pareto set} in the decision space (here, the set of images). \emph{Pareto front} $\mathcal{P}_f$ of the Pareto optimal image is the corresponding set of non-dominated tuples in the objective space:
\begin{equation}
\mathcal{P}_f=\{(\mathbf{s}(I_j^{*}), S(Q, I_j^{*})): I_j^{*} \in \mathcal{P}\}.
\end{equation}
\end{definition}
Therefore, the Pareto optimal images in $\mathcal{P}$ represent the ``best trade-offs'' across all subqueries, since no single image in $\mathcal{D}$ can strictly improve the Pareto front $\mathcal{P}_f$ on every subquery dimension.

We propose the multi-objective joint retrieval algorithm in Algorithm~\ref{alg:cross_model_rag_algo}. If an image is Pareto optimal, there exists at least one choice of \{$\alpha_i$\} for which it can maximize $ \sum_{i=1}^{n} \alpha_i s_i(I_j)$. In particular, if multiple images share the \emph{same} subquery satisfaction vector 
$\mathbf{s}(I_j)$, we can use the sub-dimensional dense retriever to further distinguish among images. 
\begin{algorithm}[t!]
\caption{\textsc{Multi-objective Joint Retrieval Algorithm}}
\label{alg:cross_model_rag_algo}
\begin{algorithmic}[1]
    \Require Query $Q$ decomposed into subqueries $\{q_i\}_{i=1}^{n}$, image retrieval database $\mathcal{D}$, 
             weights $\{\alpha_i\}_{i=1}^{n}$ with $\alpha_i>0,\sum_i\alpha_i=1$, trade‑off parameter $\beta$ with $0<\beta<\beta_{max}$
    \Ensure The set of Pareto optimal images $\mathcal{P}=\{I_j^{*}\}$ 
    \For{$I_j \in \mathcal{D}$}
        \State \text{compute} $\mathbf{s}\left(I_j\right)=\left[s_1\left(I_j\right), \ldots, s_n\left(I_j\right)\right]$
    \EndFor
    \State $\widetilde{D} \gets \{I_j \mid \mathbf{s}\left(I_j\right) \text{ is not all zeros}\}$ 
    \State $\mathcal{P} \leftarrow \emptyset$
    \For{$\alpha$ \text{in a discretized grid over the simplex}}
        \State $\mathcal{P} \gets \mathcal{P} \cup \arg\max_{I_j \in \widetilde{D}} \sum_{i=1}^{n} \alpha_is_i(I_j) + \beta \cdot nS(Q,I_j)$
    \EndFor 
\end{algorithmic}
\end{algorithm}

\begin{theorem}[Retrieval Efficiency] Let $N$ be the total number of images in $D$, $\widetilde{N} \ll N$ be the number of images in $\widetilde{D}$, $K$ is a grid of $\alpha$-values and $n$ be the number of subqueries. Also let $T_\text{clip}$ represent the cost of processing a single image with the CLIP vision encoder, and $T_\text{adaptor}$ represent the cost of the adaptor. The time complexity of Algorithm~\ref{alg:cross_model_rag_algo} is $\mathcal{O}(N) + \mathcal{O}(K \times \widetilde{N}) +\mathcal{O}(K \times \widetilde{N} \times n \times(T_{\text{clip }}+T_{\text{adaptor}}))$ and the time complexity of a pure sub-dimensional dense retriever is $\mathcal{O}(N \times n \times(T_{\text{clip}}+T_{\text {adaptor}}))$.
\end{theorem}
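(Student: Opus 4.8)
The plan is to establish the claimed complexity by accounting for the cost of Algorithm~\ref{alg:cross_model_rag_algo} line-by-line, and then separately bounding the cost of the naive pure-dense baseline for comparison. First I would analyze the sparse-filtering phase: the loop over all $I_j \in \mathcal{D}$ that computes each satisfaction vector $\mathbf{s}(I_j)$ touches every one of the $N$ images once, and since the per-image lexical matching against the $n$ fixed subqueries is treated as $\mathcal{O}(1)$ (or at worst absorbed into a constant, as the caption lengths are bounded), this phase — together with forming $\widetilde{D}$ by discarding the all-zero vectors — costs $\mathcal{O}(N)$. This is the first additive term.

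Next I would handle the grid loop over the discretized simplex of $\alpha$-values. There are $K$ grid points; for each, the algorithm computes $\arg\max_{I_j \in \widetilde{D}} \sum_i \alpha_i s_i(I_j) + \beta\, n\, S(Q, I_j)$ over the $\widetilde{N}$ surviving images. Here I would split the scoring into its two pieces: evaluating the sparse term $\sum_i \alpha_i s_i(I_j)$ for a given $\alpha$ over all $\widetilde{N}$ images is $\mathcal{O}(K \times \widetilde{N})$ across the whole loop (the inner dot product over $n$ coordinates being a bounded constant, or folded in), giving the second additive term; and evaluating the dense term $n\, S(Q, I_j) = \sum_{i=1}^n \mathrm{sim}(v_{ji}, t_i)$ requires, for each of the $\widetilde{N}$ images and each of the $n$ subqueries, one CLIP-vision encoding plus one adapter pass, i.e.\ cost $T_{\text{clip}} + T_{\text{adaptor}}$ per (image, subquery) pair, yielding $\mathcal{O}(K \times \widetilde{N} \times n \times (T_{\text{clip}} + T_{\text{adaptor}}))$. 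Summing the three phases gives exactly the stated bound. For the baseline, a pure sub-dimensional dense retriever must compute $S(Q, I_j)$ for \emph{all} $N$ images with no sparse pre-filter, incurring $\mathcal{O}(N \times n \times (T_{\text{clip}} + T_{\text{adaptor}}))$.

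The main subtlety — and the place a careful reader will want reassurance — is the implicit claim that the $K$ factor on the expensive dense term is not a genuine multiplicative blow-up: one should note that the vision subembeddings $v_{ji}$ depend only on $(I_j, t_i)$ and not on $\alpha$, so in an optimized implementation they are computed once per image and cached, collapsing the dense cost to $\mathcal{O}(\widetilde{N} \times n \times (T_{\text{clip}} + T_{\text{adaptor}}))$; the stated bound is thus an upper bound that holds even for the unoptimized re-computation, and I would remark on this to make the $\widetilde{N} \ll N$ speedup over the baseline transparent. The remaining steps — verifying that the $\arg\max$ bookkeeping, set unions into $\mathcal{P}$, and the all-zero test add only lower-order terms — are routine and I would dispatch them in a sentence. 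The hard part is really just being explicit about which quantities are treated as $\mathcal{O}(1)$ (caption length, $n$-dimensional dot products) versus which are the dominant costs ($T_{\text{clip}}, T_{\text{adaptor}}$), so that the three-term expression is honestly derived rather than asserted.
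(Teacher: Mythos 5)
Your accounting matches the paper's own proof essentially line for line: an $\mathcal{O}(N)$ sparse scan to form $\widetilde{D}$, $\mathcal{O}(K \times \widetilde{N})$ for the weighted sparse scores over the $\alpha$-grid, $\mathcal{O}(K \times \widetilde{N} \times n \times (T_{\text{clip}}+T_{\text{adaptor}}))$ for the dense term, and the pure-dense baseline over all $N$ images. Your added remark that the $v_{ji}$ are $\alpha$-independent and could be cached (so the $K$ factor on the dense term is a conservative upper bound) is a correct refinement beyond what the paper states, but the overall argument is the same.
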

\begin{proof} The formal proof can be found in Appendix~\ref{proof:efficiency}.
\end{proof}
Since $\widetilde{N} \ll N$ and $K$ is a relatively small constant, the dominant term of Algorithm~\ref{alg:cross_model_rag_algo} is far less than a pure sub-dimensional dense retriever. In terms of retrieval efficiency, we adopt Algorithm~\ref{alg:cross_model_rag_algo} - a hybrid of sub-dimensional sparse and dense retriever. 

\begin{theorem}[Algorithm Optimality] 
\label{theorem}
Let $\delta_{\min }=\min \left\{\alpha_i \mid \alpha_i>0\right\}$ be the smallest nonzero subquery weight, and $C_{\max }=\max \sum_{i=1}^n \cos \left(v_{j,i}, t_i\right)$. For any $0<\beta<\beta_{max}=\frac{\delta_{min}}{C_{\max}}$, Algorithm~\ref{alg:cross_model_rag_algo} returns all Pareto-optimal solutions to Eq.\ref{objective}.
\end{theorem}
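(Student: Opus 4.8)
The plan is to split the objective $F(I_j)=\sum_{i=1}^{n}\alpha_i s_i(I_j)+\beta\sum_{i=1}^{n}\cos(v_{ji},t_i)$ into its sparse and dense pieces (recall $nS(Q,I_j)=\sum_{i}\cos(v_{ji},t_i)$, and, assuming non-negative similarities, $0\le nS(Q,I_j)\le C_{\max}$ for every $I_j$), and to lean on the single inequality $\beta\cdot nS(Q,I_j)\le\beta C_{\max}<\beta_{\max}C_{\max}=\delta_{\min}$. In words: the dense contribution of any image is strictly below the smallest positive weight $\delta_{\min}$, so it can act only as a tie-breaker \emph{among images with identical satisfaction vectors} and can never reverse a sparse comparison that is decided by one or more full subquery weights. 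This is the engine of the proof.

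Next I would fix an arbitrary Pareto-optimal image $I^{*}$, set $\sigma^{*}=\mathbf s(I^{*})$, note $\sigma^{*}\neq\mathbf 0$ (hence $I^{*}\in\widetilde D$, the set the algorithm actually searches), and prove two facts. (i) $\sigma^{*}$ is \emph{maximal} in the dominance order on $\{\mathbf s(I):I\in\widetilde D\}$: if some $I'$ had $\mathbf s(I')\supsetneq\sigma^{*}$, then for every admissible $\alpha,\beta$ we would get $\langle\alpha,\mathbf s(I')-\sigma^{*}\rangle\ge\delta_{\min}>\beta\cdot nS(Q,I^{*})\ge\beta\bigl(nS(Q,I^{*})-nS(Q,I')\bigr)$, whence $F(I')>F(I^{*})$ always, so $I^{*}$ could not be a maximizer of $F$ and would not be Pareto-optimal. (ii) Using maximality I would exhibit an admissible grid weight vector $\alpha^{*}$ that puts (near-)uniform mass on the coordinates in $\sigma^{*}$ and the minimal grid mass on the rest; maximality forces $|\mathbf s(I')\cap\sigma^{*}|\le|\sigma^{*}|-1$ for every $I'$ with $\mathbf s(I')\neq\sigma^{*}$, which yields a strict sparse margin $\langle\alpha^{*},\sigma^{*}-\mathbf s(I')\rangle\gtrsim\tfrac{1}{|\sigma^{*}|}-O(\delta_{\min})$ in favour of $\sigma^{*}$. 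For a sufficiently fine grid this margin exceeds $\beta C_{\max}$, so by the inequality of the first paragraph $F(I^{*})>F(I')$ whenever $\mathbf s(I')\neq\sigma^{*}$ (with weights $\alpha^{*},\beta$).

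Finally I would dispose of images sharing $I^{*}$'s satisfaction vector: if $\mathbf s(I')=\sigma^{*}$ then $I'$ cannot dominate $I^{*}$, so $S(Q,I^{*})\ge S(Q,I')$ and hence $F(I^{*})\ge F(I')$. Combining the two cases, $I^{*}\in\arg\max_{I_j\in\widetilde D}F(I_j)$ under weights $\alpha^{*},\beta$ (interpreting $\arg\max$ as the full set of maximizers), so the iteration of Algorithm~\ref{alg:cross_model_rag_algo} indexed by $\alpha^{*}$ adds $I^{*}$ to $\mathcal P$; as $I^{*}$ was arbitrary, every Pareto-optimal image is returned. I expect the main obstacle to be step (ii): constructing a single grid point $\alpha^{*}$ that strictly separates $\sigma^{*}$ from \emph{all} other realized satisfaction vectors with a margin larger than $\beta C_{\max}$ — not merely positive — which is exactly where the quantitative choice $\beta_{\max}=\delta_{\min}/C_{\max}$ is indispensable and where the discretized simplex must be taken fine enough relative to $n$; a lesser subtlety, handled in fact (i), is pinning the exact reading of ``Pareto-optimal solution to Eq.~\ref{objective}'' so that images with non-maximal satisfaction vectors are excluded.
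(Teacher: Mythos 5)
Your proposal is correct and follows essentially the same route as the paper's own proof in Appendix~\ref{app:optimality}: the bound $\beta C_{\max} < \delta_{\min}$ guarantees the dense term can never overturn a sparse (dominance) comparison and acts only as a tie-breaker, the sweep over the $\alpha$-grid realizes every non-dominated satisfaction vector as an $\arg\max$, and the dense score resolves ties among images sharing the same $\mathbf{s}(I_j)$. Your step (ii) simply makes explicit --- via the near-uniform supporting weights on $\mathrm{supp}(\sigma^{*})$ and the sufficient-grid-fineness caveat --- what the paper asserts in one line (``because subqueries are strictly enumerated by $\alpha$, \ldots no non-dominated $\mathbf{s}(I_j)$ is missed''), so it is a more careful execution of the same argument rather than a different one.
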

\begin{proof}
The formal proof can be found in Appendix~\ref{app:optimality}.
\end{proof}
\subsection{Image Generation with Retrieved Images}
\label{sec:generation}
To generate an image from the user query $Q$ while ensuring that the \emph{satisfied} subquery features in $\mathcal{P}$ are preserved, we utilize a pretrained MLLM with subquery-aware instructions. 

Given the set of retrieved images is
\(\mathcal{P}=\{I_j^{*}\}\), each retrieved image \(I_j^{*}\) is associated with a subquery satisfaction vector \(\mathbf{s}(I_j^{*})\). Let
\begin{equation}
    Q_j \;=\; 
    \{\, q_i \;\mid\; s_i(I_j^{*}) = 1 \}
    \label{eq:subqueries_satisfied}
\end{equation}
be the subset of subqueries from the user query \(Q\) that \(I_j^{*}\) actually satisfies.

For each image \(I_j^{*} \in \mathcal{P}\), we construct an in‐context example in a form: \emph{$\langle I_j^{*}\rangle$ Use only [$Q_j$] in [$I_j^{*}$]}.

Here, $\langle I_j^{*}\rangle$ denotes the visual tokens for the $j$-th retrieved image, [$Q_j$] is the satisfied subqueries in $I_j^{*}$, and [$I_j^{*}$] is "the $j$-th retrieved image".

Next, we feed the in‐context examples to a pretrained MLLM together with the original query \(Q\). The MLLM, which operates in an autoregressive manner, is thus guided to generate the final image \(\hat{I}\) as:
\begin{equation}
    p_{\theta}\bigl(\hat{I}\;\big|\;Q,\{I_j^{*}\},\{Q_j\}\bigr)
    \;=\; 
    \prod_{t=1}^{T} 
    p_{\theta}\Bigl(\hat{I}_t 
        \,\Big|\,
        \hat{I}_{<t},\,Q,\,
        \{I_j^{*}\},\{Q_j\}\bigr),
\end{equation}
where \(\hat{I}_t\) denotes the \(t\)-th visual token in the generated image representation, and $\theta$ represents the parameters of the pretrained MLLM. By referencing the full prompt: \texttt{[}$Q$\texttt{]} \(\langle I_j^{*}\rangle \; \texttt{Use only [}Q_j\texttt{] in [}I_j^{*}\texttt{]}\), the MLLM learns to preserve the relevant subquery features that each retrieved image \(I_j^{*}\) contributes.

\section{Experiments}
\subsection{Experiment Setup}
\textbf{Baselines and Evaluation Metrics} We compare our proposed method with several baselines on text-to-image retrieval and text-to-image generation.
\begin{itemize}[leftmargin=*,labelsep=0.5em]
    \item Text-to-Image Retrieval Baselines: CLIP (ViT-L/14)~\citep{radford2021learning} is a dual-encoder model pretrained on large-scale image-text pairs and remains the most widely adopted baseline for T2I retrieval. SigLIP (ViT-SO400M/14@384)~\citep{zhai2023sigmoid} improves retrieval precision over CLIP by replacing the contrastive loss with a sigmoid-based loss. ViLLA~\citep{varma2023villa} is a large-scale vision-language pretraining model with multi-granularity objectives. BLIP-2 is an efficient vision-language model~\citep{li2023blip}. We report BLIP-2 results as cited from~\citep{ge2023making}. GILL~\citep{koh2023generating} is a unified framework that combines retrieval and generation. 
    \item Text-to-Image Generation Baselines: SDXL~\citep{podell2023sdxl} is a widely used high-quality T2I diffusion model. FLUX.1-dev~\citep{blackforestlabs2024flux} is a T2I DiT-based model with high-fidelity outputs. LaVIT~\citep{jin2023unified} is a vision-language model that supports T2I generation. RDM~\citep{blattmann2022retrieval} is a representative retrieval-augmented diffusion model. UniRAG~\citep{sharifymoghaddam2024unirag} is a recent retrieval-augmented vision-language model. GILL~\citep{koh2023generating} can perform both T2I-R and T2I-G. We use gpt-image-1~\citep{openai2025gpt4oimage} and gemini-2.0-flash~\citep{google2025gemini2flash} as our MLLM backbones, and compare them against their MLLM baselines without retrieval. 
    \end{itemize} 

For T2I-R, we adopt the standard retrieval metric Recall at K(R@K, K=1, 5, and 10). For T2I-G, we evaluate the quality of generated images by computing the average pairwise cosine similarity of generated and ground-truth images with CLIP(ViT-L/14)~\citep{radford2021learning}, DINOv2(ViT-L/14)~\citep{stein2023exposing}, and SigLIP(ViT-SO400M/14@384)~\citep{zhai2023sigmoid} embeddings. We also employ style loss~\citep{gatys2015neural} to assess artistic style transfer in WikiArt~\citep{asahi417_wikiart_all}. 

\textbf{Dataset Construction} We evaluate the text-to-image retrieval on the standard benchmark MS-COCO~\citep{chen2015microsoft} and Flickr30K~\citep{young2014image} test sets. As for the text-to-image generation, we evaluate the model's image generation capabilities across different aspects and choose three datasets: artistic style transfer in the WikiArt~\citep{asahi417_wikiart_all}, fine-grained image generation in the CUB~\citep{WahCUB_200_2011}, and long-tailed image generation in the ImageNet-LT~\citep{liu2019large}. For each generation dataset, we select some test samples and use the remaining as the retrieval database. More details in Appendix~\ref{app:dataset}.

\textbf{Implementation Details} For T2I-R, our sub-dimensional dense retriever is composed of a pretrained CLIP vision encoder (ViT-L/14) and an adaptor. We train the sub-dimensional dense retriever on the COCO training set using the InfoNCE loss with a temperature of 0.07. The adaptor is optimized using the Adam optimizer with an initial learning rate of 5e-5, and a StepLR scheduler with a step size of 3 epochs and a decay factor of 0.6. The finetuned CLIP (ViT-L/14) is finetuned on the same COCO training set. We set $\beta=0.015$ based on Therorem~\ref{theorem}. The experiments\footnote{The code is available at https://github.com/mengdanzhu/Cross-modal-RAG.} are conducted on a 64-bit machine with 24-core Intel 13th Gen Core i9-13900K@5.80GHz, 32GB memory and NVIDIA GeForce RTX 4090.

\subsection{Quantitative Evaluation of Text-to-Image Retrieval}
\begin{table}[t]
\centering
\caption{Evaluation of Text-to-Image Retrieval on MS-COCO and Flickr30K.}
\resizebox{0.65\textwidth}{!}{
\begin{tabular}{l|ccc|ccc}
\toprule
\multirow{2}{*}{Method} & \multicolumn{3}{c|}{MS-COCO (5K)} & \multicolumn{3}{c}{Flickr30K (1K)} \\
\cmidrule{2-7}
& R@1 & R@5 & R@10 & R@1 & R@5 & R@10 \\
\midrule
CLIP (ViT-L/14) & 43.26 & 68.70 & 78.12 & 77.40 & 94.80 & 96.60  \\
Finetuned CLIP (ViT-L/14) & 46.10 & 72.52 & 82.20 & 78.50 & 95.00 & 97.30  \\
SigLIP & 46.96 & 71.72 & 80.64 & 82.20 & 95.90 & 97.70  \\
ViLLA & 34.77 & 60.67 & 70.69 & 59.41 & 85.02 & 92.82  \\
BLIP-2 & 59.10 & 82.70 & 89.40 & 82.40 & 96.50 & 98.40 \\
GILL & 32.12 & 57.73 & 66.55 & 55.41 & 81.94 & 89.77  \\
\rowcolor{myblue}
\textbf{Ours} & \textbf{81.82} & \textbf{97.46} & \textbf{99.38} & \textbf{97.50} & \textbf{100.00} & \textbf{100.00}  \\
\bottomrule
\end{tabular}
}
\label{tab:retrieval}
\vspace{-5.3mm}
\end{table}
We test our sub-dimensional dense retriever model with various types of T2I-R models. As shown in Tab.~\ref{tab:retrieval}, our proposed sub-dimensional dense retriever achieves state-of-the-art performance across all metrics and outperforms all baselines by a substantial margin on both MS-COCO and Flickr30K datasets. On MS-COCO, our method achieves R@1 = 81.82\%, R@5 = 97.46\%, and R@10 = 99.38\%, which are significantly higher than the best-performing baseline BLIP-2. The relative improvements are 38.4\% on R@1, 17.8\% on R@5, and 11.2\% on R@10, demonstrating our model's superior capability in text-to-image retrieval. Notably, it exhibits strong zero-shot T2I-R performance on Flickr30K, achieving near-perfect accuracy with R@1 = 97.50\%, R@5 = 100.00\%, and R@10 = 100.00\%, and surpassing CLIP’s R@1 = 77.40\% by nearly 26\%. These results confirm that our proposed sub-dimensional dense retriever significantly enhances fine-grained T2I-R compared to global embedding alignment such as CLIP, finetuned CLIP, SigLIP, and BLIP-2, region-based fine-grained match on ViLLA, and the retrieval and generation unified framework GILL.

\subsection{Quantitative Evaluation of Text-to-Image Generation}
\begin{table}[t]
\centering
\caption{Evaluation of Text-to-Image Generation on WikiArt, CUB, and ImageNet-LT.}
\resizebox{\textwidth}{!}{
\begin{tabular}{l|cccc|ccc|ccc}
\toprule
\multirow{2}{*}{Method} & \multicolumn{4}{c|}{WikiArt} & \multicolumn{3}{c|}{CUB} & \multicolumn{3}{c}{ImageNet-LT} \\
\cmidrule{2-11}
& CLIP $\uparrow$ & DINO $\uparrow$ & SigLIP $\uparrow$ & Style Loss $\downarrow$ 
& CLIP $\uparrow$ & DINO $\uparrow$ & SigLIP $\uparrow$
& CLIP $\uparrow$ & DINO $\uparrow$ & SigLIP $\uparrow$ \\
\midrule
FLUX.1-dev & 0.605  & 0.307 &	0.617 &	0.056 &	0.720 & 0.372 & 0.681 & 0.654 & 0.381 & 0.625 \\
SDXL & 0.688 & 0.504 & 0.720 & 0.022 
& 0.743 & 0.519 & 0.738 
& 0.668 & 0.403 & 0.653 \\
LaVIT & 0.689 & 0.485 & 0.721 & 0.036 
& 0.676 & 0.245 & 0.647 
& 0.662 & 0.365 & 0.652 \\
RDM & 0.507 & 0.237 & 0.528 & 0.024 
& 0.638 & 0.326 & 0.663 
& 0.576 & 0.333 & 0.603 \\
UniRAG & 0.646 & 0.362 & 0.654 & 0.068
&0.746 & 0.344 & 0.718
&0.610 & 0.255 & 0.600 \\
GILL & 0.629 & 0.439 & 0.654 & 0.027 
& 0.719 & 0.185 & 0.675 
& 0.635 & 0.228 & 0.615 \\
gemini-2.0-flash &	0.721 & 0.537 &	0.723 & 0.026 & 0.726 & 0.569 & 0.713 & 0.692 & 0.456 & 0.682 \\
gpt-image-1 &	0.730 & 0.542 & 0.733 & 0.024 & 0.735 & 0.575 & 0.708 & 0.695 & 0.476 & 0.683 \\
\rowcolor{myblue}
\textbf{Ours (gemini-2.0-flash)} &	0.735	& 0.568 & 0.738	& 0.021 & 0.760	& 0.595 & \textbf{0.747} &	\textbf{0.825} &	0.717	& \textbf{0.814} \\
\rowcolor{myblue}
\textbf{Ours (gpt-image-1)} & \textbf{0.746} & \textbf{0.604} & \textbf{0.744} & \textbf{0.019}
& \textbf{0.764} & \textbf{0.600} & 0.744
& 0.815 & \textbf{0.761} & 0.812 \\
\bottomrule
\end{tabular}
}
\label{tab:gen}
\end{table}
We benchmark our Cross-modal RAG method against state-of-the-art text-to-image generation models, including diffusion-based (SDXL, FLUX), autoregressive (LaVIT, gemini-2.0-flash, gpt-image-1), retrieval-augmented (RDM, UniRAG), and retrieval and generation unified (GILL) baselines. The evaluation is conducted on three datasets that span different generation challenges: WikiArt (artistic style transfer), CUB (fine-grained), and ImageNet-LT (long-tailed). As shown in Tab.~\ref{tab:gen}, our Cross-modal RAG achieves the highest scores across all models and datasets, outperforming both its MLLM backbone without retrieval and other text-to-image generation models. These results demonstrate that our retrieval module is highly generalizable across different MLLM backbones and more effective than existing retrieval-augmented methods. On WikiArt, our method achieves the best performance in CLIP, DINO, and SigLIP, along with the lowest style loss, indicating it can capture the particular artistic style specified in the retrieved images effectively. On CUB, Cross-modal RAG also performs strongly across all three metrics, because it can localize and leverage the specific visual details in the retrieved images to facilitate generation. On ImageNet-LT, our method can retrieve images that best match the query, which greatly benefits T2I-G in the long-tailed situation. 

\subsection{Qualitative Analysis}
\begin{figure*}[t]
\begin{center}
\includegraphics[width=0.83\textwidth]{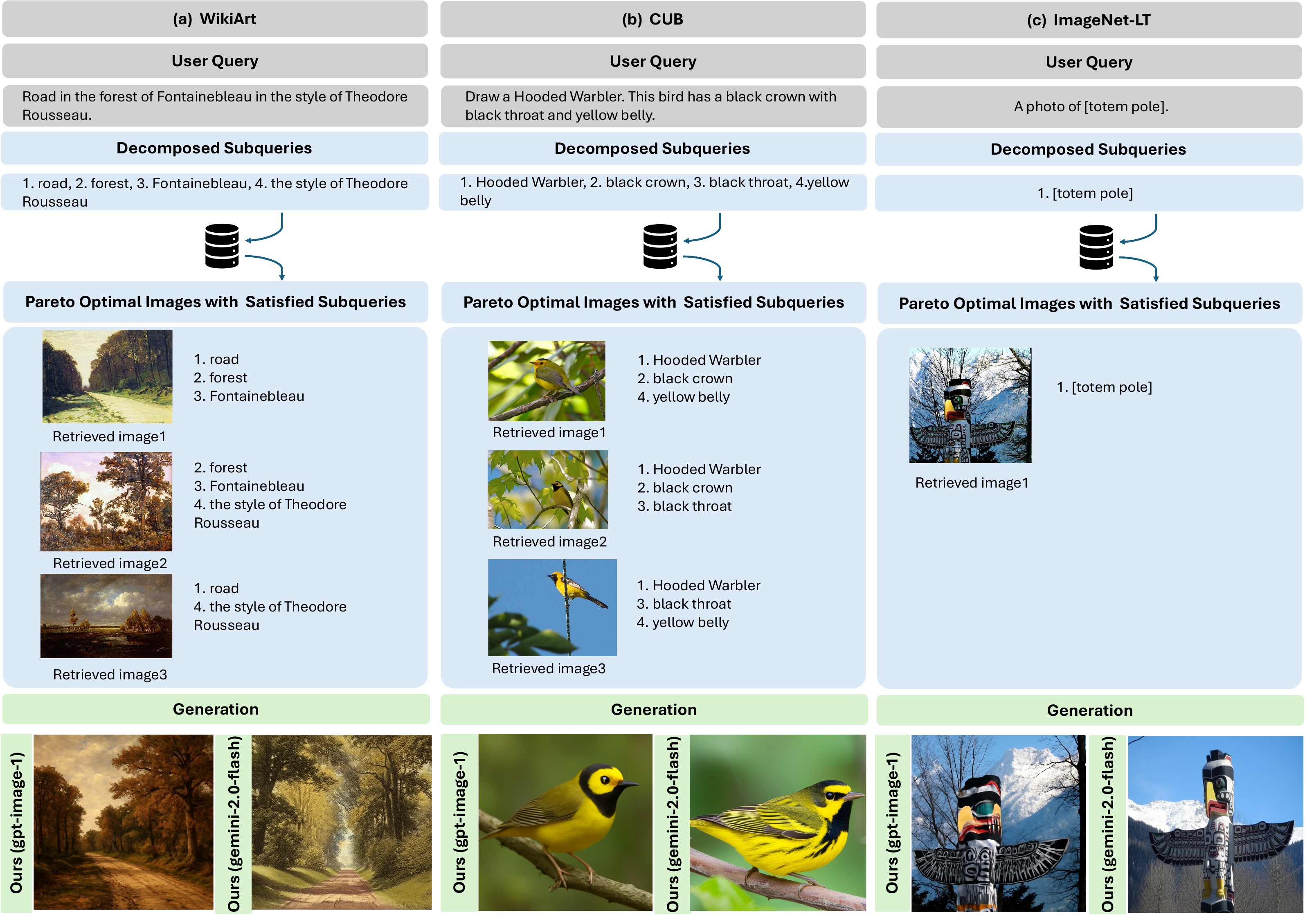}
\caption{The retrieved pareto optimal images with their corresponding satisfied subqueries in (a) WikiArt, (b) CUB and (c) ImageNet-LT datasets and generation results of Cross-modal RAG.}
\label{fig:qualitative}
\end{center}
\vspace{-8mm}
\end{figure*}

To qualitatively illustrate the effectiveness of our Cross-modal RAG model, we visualize some examples of our retrieved pareto optimal images with their corresponding satisfied subqueries and generated outputs across all datasets in Fig.~\ref{fig:qualitative}. The satisfied subqueries of each retrieved Pareto-optimal image are non-overlapping, and each retrieved image is optimal with respect to the sub-dimensions it satisfies. Therefore, we can guarantee that the Pareto set $\mathcal{P}$ collectively covers images with all satisfied subqueries in the retrieval database $\mathcal{D}$. Moreover, since the model knows which subqueries are satisfied by each retrieved image, MLLMs can be guided to condition on the relevant subquery-aligned parts of each retrieved image during generation. As shown Fig.~\ref{fig:qualitative}(a), the model is capable of \emph{style transfer}, learning the artistic style of a certain artist (\textit{e.g.}, Theodore Rousseau) while preserving the details corresponding to each subquery (\textit{e.g.}, road, forest, Fontainebleau). The model is also able to retrieve accurate fine-grained information and compose the entities in subqueries (\textit{e.g.}, black crown, black throat, yellow belly) to perform \emph{fine-grained image generation} on the CUB dataset in Fig.~\ref{fig:qualitative}(b). Moreover, the model is good at \emph{long-tailed or knowledge-intensive image generation}. In Fig.~\ref{fig:qualitative}(c), ImageNet-LT is a long-tailed distribution dataset with many rare entities (\textit{e.g.}, totem pole). Retrieving such correct images can help improve generation fidelity. Baseline models without retrieval capabilities tend to struggle in these scenarios. More comparisons of generated images with other baselines are provided in Appendix~\ref{app:visualization}. 

\subsection{Efficiency Analysis}
\begin{table}[t]
\vspace{0mm}
\centering
\caption{Evaluation of the retrieval efficiency on the COCO. Our methods are denoted in \colorbox{mygray}{gray}.}
\label{tab:efficiency}
\resizebox{0.65\textwidth}{!}{
\begin{tabular}{lccc}
\toprule
Method & GPU Memory (MB) & \# of Parameters (M) & Query Latency (ms) \\
\midrule
CLIP (ViT-L/14) & 2172.19 & 427.62 & 8.68\\
\rowcolor{mygray}
dense (all) & 2195.26 & 433.55 & 14.22 \\
\rowcolor{mygray}
dense (adaptor) & 23.07 & 7.31 & 4.35\\
\rowcolor{mygray}
sparse & 0 & 0 & 2.17\\
\bottomrule
\end{tabular}
}
\vspace{-7mm}
\end{table}

As shown in Tab.~\ref{tab:efficiency}, we compare the retrieval efficiency of CLIP (ViT-L/14) with our sub-dimensional dense and sparse retrievers on the COCO test set. Our sub-dimensional dense retriever is composed of a frozen CLIP encoder (ViT-L/14) and a lightweight adaptor. As reported in Table~\ref{tab:retrieval}, the sub-dimensional dense retriever improves Recall@1 by +89.14\% over CLIP on COCO. Despite the adaptor's minimal overhead -- only 0.01× CLIP’s GPU memory usage, 0.017× its number of parameters, and 0.5× its query latency -- its performance gain is substantial. Our sub-dimensional sparse retriever is text-based and operates solely on the CPU, requiring no GPU memory consumption, no learnable parameters and achieving the lowest query latency. Our Cross-modal RAG method, a hybrid of our sub-dimensional sparse and dense retriever, can leverage the complementary strengths of both and achieve query latency that lies between the pure sparse and dense retrievers -- closer to that of the sparse. These results show Cross-modal RAG's efficiency and scalability for large-scale text-to-image retrieval tasks without compromising effectiveness. 

\subsection{Ablation Study}
\textbf{Ablation Study on Subquery Decomposition}

\begin{wrapfigure}{r}{0.45\textwidth}
\vspace{-4mm}
\centering
\includegraphics[scale=0.25]{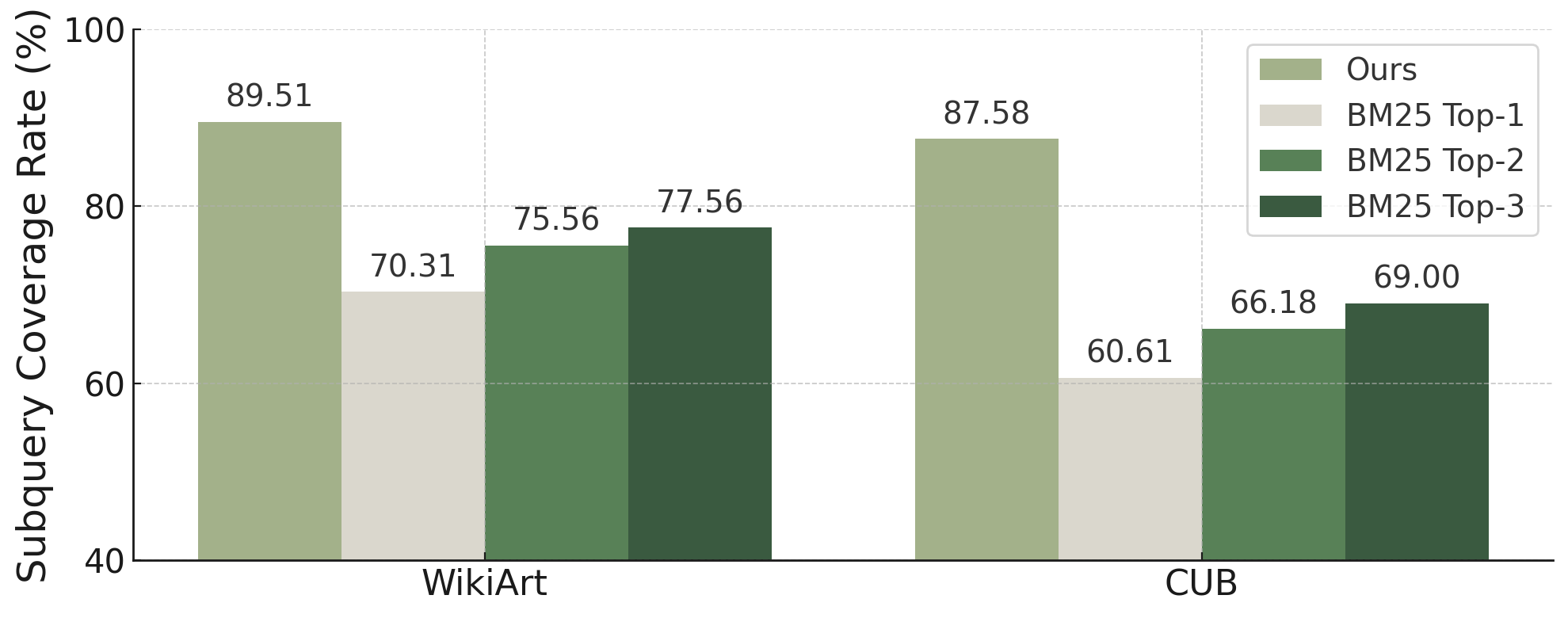}
  \vspace{-7mm}
  \caption{Ablation Study on Subquery Decomposition on the WikiArt and CUB.}
  \label{fig:bm25}
  \vspace{-4mm}
\end{wrapfigure}
We evaluate retrieval performance without subquery decomposition on multi-subquery datasets, WikiArt and CUB, by directly using a BM25 retriever to retrieve the top-1, top-2, and top-3 images based on the full user query. Our multi-objective joint retrieval method achieves a higher subquery coverage rate compared to the conventional text-based BM25 retrieval on both WikiArt and CUB in Fig.~\ref{fig:bm25}. This result indicates that our multi-objective joint retrieval method retrieves a set of images $\mathcal{P}$ that collectively cover the largest number of subqueries from $\mathcal{D}$, demonstrating its superior ability to capture the full semantic intent of the user query.

\textbf{Ablation Study on the Sub-dimensional Dense Retriever}

\begin{minipage}{0.52\textwidth}
\vspace{0pt} 
We retain the sub-dimensional sparse retriever and replace the sub-dimensional dense retriever in Cross-modal RAG with a randomly selected image. The results in Tab.~\ref{tab:dense} show that our dense retriever is able to retrieve images that best match the entity in the query in the ImageNet-LT. Notably, our dense retriever, though trained only on the COCO, generalizes well to unseen entities on the ImageNet-LT.
\end{minipage}
\hfill
\begin{minipage}{0.45\textwidth}
\vspace{0pt}
\captionsetup{type=table}
\captionof{table}{Ablation Study of our Cross-modal RAG w/o dense retriever on the ImageNet-LT.}
\label{tab:dense}
\resizebox{0.95\textwidth}{!}{
\begin{tabular}{lccc}
\hline
Method & CLIP $\uparrow$ & DINO $\uparrow$ & SigLIP $\uparrow$  \\
\hline
Ours (gpt-image-1) & 0.815 & 0.761 & 0.812 \\
w/o dense & 0.773 & 0.607 & 0.752 \\
\hline
\end{tabular}
}
\end{minipage}

\section{Conclusion}
\label{sec:conclusion}

We proposed Cross-modal RAG, a novel sub-dimensional text-to-image retrieval-augmented generation framework. By efficient and fine-grained T2I retrieval, it facilitates domain-specific, fine-grained, and long-tailed image generation. Our method leverages a hybrid retrieval strategy combining sub-dimensional sparse filtering with dense retrieval to precisely align subqueries with visual elements, guiding a MLLM to generate coherent images on the subquery level. The Pareto-optimal image selection ensures the largest coverage of various aspects in the query. Extensive experiments demonstrated Cross-modal RAG’s superior performance over state-of-the-art baselines in T2I-R and can also benefit T2I-G. The ablation study and efficiency analysis highlight the effectiveness of each component and efficiency in the Cross-modal RAG.

\bibliography{iclr2026_conference}
\bibliographystyle{iclr2026_conference}

\newpage
\appendix

\section{Decomposing Queries $Q$ into Subqueries $q_i$}
\label{app:query_prompt}

We decompose queries $Q$ into subqueries $q_i$ using the prompt template in Fig.~\ref{fig:query_prompt} to obtain $t_i$. Furthermore, we remove the attribute modifiers in $q_i$ to derive $T_i$ with an LLM (e.g., GPT-4o mini). For example, if $t_i$ corresponds to the embedding of “orange cat”, then $T_i$ corresponds to the embedding of “cat”. In some cases where $q_i$ does not contain any attribute modifiers, $t_i$ and $T_i$ are identical.

\begin{figure*}[h]
\begin{tcolorbox}[colback=blue!10!white, colframe=blue!50!black, title=Decomposing Queries into Subqueries]
Given an image caption, decompose the caption into an atomic entity. Each entity should preserve descriptive details (e.g., size, color, material, location) together with the entity in a natural, readable phrase. The entity should contain a noun and reserve noun modifiers in the caption. Please ignore the entities like `a photo of', `an image of', `an overhead shot', `the window showing' that are invisible in the image and ignore the entities like 'one' and 'the other' that have duplicate entities before. \\

Caption: two cars are traveling on the road and waiting at the traffic light.

Entity: cars, road, traffic light   \\

Caption: duplicate images of a girl with a blue tank top and black tennis skirt holding a tennis racquet and swinging at a ball.

Entity: girl, blue tank top, black tennis skirt, tennis racqet, ball  \\

Caption: the window showing a traffic signal is covered in droplets of rainwater.

Entity: traffic signal, droplets of rainwater   \\

Caption: an overhead shot captures an intersection with a "go colts" sign.

Entity: intersection, "go colts" sign   \\

Caption: a van with a face painted on its hood driving through street in china.

Entity: van, a face painted on its hood, street in china   \\

Caption: two men, one with a black shirt and the other with a white shirt, are kicking each other without making contact.

Entity: men, black shirt, white shirt  \\

        Caption:
        \{\emph{caption}\}
        
        Entity:
\end{tcolorbox}
\caption{The Prompt Example for Decomposing User Queries into Subqueries on MS-COCO.}
\label{fig:query_prompt}
\end{figure*}

\section{Proof of the Time Complexity in Retrieval Efficiency}
\label{proof:efficiency}
\begin{enumerate}
    \item Proof of the time complexity for Algorithm~\ref{alg:cross_model_rag_algo}
    
    Let $N$ be the total number of images in $D$. We can score each image's sparse textual match in $\mathcal{O}(N)$. We discard images that do not satisfy any subquery, leaving a reduced set $\widetilde{D} \subseteq D$ of size $\widetilde{N}$.

    We then discretize the simplex of subquery weights $\alpha$ into $K$ possible combinations. Each combination requires checking $\sum_i \alpha_i s_i\left(I_j\right) \text { in } \mathcal{O}(\widetilde{N}) \text { time }$, thus $\mathcal{O}(K \times \widetilde{N})$ in total.

    Each adaptor pass handles both CLIP‐based vision encoding (costing $T_\text{clip}$ and the adaptor's own cross-attention (costing $T_\text{adaptor}$)). If we assume one pass per subquery set (of size $n$), the total cost is $\widetilde{N} \times n \times\left(T_{\text {clip }}+T_{\text {adaptor }}\right)$. Multiplied by $K$ weight vectors, this yields $\mathcal{O}\left(K \times \widetilde{N} \times n \times\left(T_{\text {clip }}+T_{\text {adaptor }}\right)\right)$.

    Combining the steps above, total time is: 
    
    $\mathcal{O}(N) + \mathcal{O}(K \times \widetilde{N}) +\mathcal{O}\left(K \times \widetilde{N} \times n \times\left(T_{\text{clip }}+T_{\text{adaptor}}\right)\right)$.
    
    \item Proof of the time complexity for a pure sub-dimensional dense retriever
    
    If we skip the sparse filter, we must embed all $N$ images for each subquery. Thus, the pure dense approach demands $\mathcal{O}\left(N \times n \times\left(T_{\text{clip}}+T_{\text {adaptor}}\right)\right)$.
    
\end{enumerate}

Because $\widetilde{N} \ll N$ and $K$ is small, this total is typically far lower than scanning all $N$ images with the sub-dimensional dense retriever.

\section{Proof of Algorithm Optimality}
\label{app:optimality}
Because $s_i\left(I_j\right) \in\{0,1\}$ and $\sum_i \alpha_i=1$, $\sum_i \alpha_i s_i\left(I_j\right)$ lies in $[0,1]$. Since $\cos \left(v_{j,i}, t_i\right) \in[0,1]$, $\sum_i \cos \left(v_{j,i}, t_i\right) \leq n$. Hence $C_{\max } \leq n$.
    
    Suppose $I_a$ dominates $I_b$. Then
    \begin{equation}
        \Delta_{\text{sparse}}=\sum_i \alpha_i s_i\left(I_a\right)-\sum_i \alpha_i s_i\left(I_b\right)>0.
    \end{equation}
    Let
    \begin{equation}
        \Delta_{\text{dense}}=\sum_i \cos \left(v_{a,i}, t_i\right)-\sum_i \cos \left(v_{b,i}, t_i\right).
    \end{equation}
    We want $\Delta_{\text {sparse}}+\beta \Delta_{\text {dense}}>0$.
    In the worst case for $I_a, \Delta_{\text {dense}}<0$, potentially as low as $-C_{\max}$. A sufficient condition for $I_a$ to stay preferred is
    \begin{equation*}
        \Delta_{\text {sparse}}-\beta C_{\max }>0.
    \end{equation*}
    Because $\Delta_{\text {sparse}} \geq \delta_{\min }$ if $I_a$ indeed satisfies at least one more subquery and $\beta>0$ is assumed by definition, we obtain:
    \begin{equation}
    0<\beta<\frac{\delta_{\min }}{C_{\max }}=\beta_{max}.
    \end{equation}
We discretize the simplex $\left\{\alpha: \alpha_i \geq 0, \sum_i \alpha_i=1\right\}$. Because subqueries are strictly enumerated by $\alpha$, if an image satisfies a unique subquery set, it must appear as an $\arg \max \sum_i \alpha_i s_i\left(I_j\right)$ for some $\alpha$. Thus, no non-dominated $\mathbf{s}\left(I_j\right)$ is missed. $\Delta_{\text{dense}}$ can be further used to find an optimal image among those sharing the same $\mathbf{s}\left(I_j\right)$. Therefore, all Pareto‐optimal solutions are obtained and $\mathbf{s}(I_j)$ in the Pareto front $\mathcal{P}_f$ is unique.

\section{Datasets}
\label{app:dataset}
For T2I-R on MS-COCO, we follow the Karpathy split using 82,783 training images, 5,000 validation images, and 5,000 test images. For Flickr30K, we only use 1,000 images in the test set for evaluation.

Regarding T2I-G, WikiArt dataset is a comprehensive collection of fine art images sourced from the WikiArt online encyclopedia. Our implementation is based on the version provided in~\citep{asahi417_wikiart_all}. To construct the test set, we compare artwork titles across different images and identify pairs that differ by at most three tokens. From each matched pair, we retain one sample, resulting in 2,619 distinct test examples. The query for each test sample is formatted as: \emph{<title> in the style of <artistName>.} The retrieval database is composed of the remaining WikiArt images after excluding all test samples, ensuring no overlap between ground-truth and retrieval candidates, as shown in Tab.~\ref{app:data}. The Caltech-UCSD Birds-200-2011 (CUB-200-2011)~\citep{WahCUB_200_2011} is a widely used benchmark for fine-grained image classification and generation tasks. It contains 11,788 images across 200 bird species. We use the CUB dataset with 10 single-sentence visual descriptions per image collected by~\citep{reed2016learning}. Similarly, to construct the test set, we compare captions across different images and identify pairs that differ by one token, resulting in 5,485 distinct test samples. The query for each test sample is formatted as: \emph{Draw a <speciesName>. <caption>.} For each test sample, the retrieval candidates consist of all remaining images in the CUB dataset, excluding that test image. The ImageNet-LT dataset~\citep{liu2019large} is a long-tailed version of the original ImageNet dataset. It contains 1,000 classes with 5 images per class. We randomly choose one image from each class to construct the test samples. The retrieval database is composed of the remaining ImageNet-LT images after excluding all test samples. The query for each test sample is formatted as: \emph{A photo of <className>.} 

\begin{table}[h]
\centering
\caption{Data construction of the T2I-G datasets}
\label{app:data}
\resizebox{0.9\textwidth}{!}{
\begin{tabular}{lccc}
\toprule
Dataset & \# of image in the dataset & \# of test samples & \# of images in the retrieval database \\
\midrule
WikiArt & 63,061 & 2,619 & 60,442\\
CUB & 11,788 & 5,485 & 11,787 \\
ImageNet-LT & 50,000 & 1,000 & 49,000\\
\bottomrule
\end{tabular}
}
\vspace{-6mm}
\end{table}

\section{Limitations}
\label{sec:limitations}
While our multi-objective joint retrieval combining sub-dimensional sparse and dense retrievers is efficient and achieves good granularity, if the image retrieval database $\mathcal{D}$ does not contain any images relevant to the query, the retrieval cannot provide benefits to the generation. However, this drawback is not an issue unique to our RAG model; rather, the success of any RAG approach depends on the presence of relevant data (i.e., images in our case) in the database. As long as there exist partially related images in the database—even if they do not perfectly match all aspects of the query—our method can identify the most overlapping images with the query to effectively support generation.

\section{LLM Usage Disclosure}
We use large language models to correct the grammar and improve the clarity of writing in this paper.

\section{Hyper-parameter analysis}
For the trade-off parameter $\beta$ and the weight vector $\alpha$, in our experiments, we set $\beta = 0.015$ based on Theorem~\ref{theorem}, where for any $0 < \beta < \beta_{\max} = \frac{\delta_{\min}}{C_{\max}}$, our algorithm guarantees to return all Pareto-optimal solutions. The maximum number of subqueries is $8$ in our experiments. Assuming equal weight for each subquery, we have $\beta_{\max} = \frac{1/8}{8} \approx 0.0156,$ so we set $\beta = 0.015$ accordingly. 

The design of $\alpha$ is intended to ensure the selection of a Pareto-optimal set of images would collectively cover all subqueries in the input user query. If an image satisfies a subquery that is not covered by other images, then there exists at least one choice of $\{\alpha_i\}$ for which it can maximize $\sum_{i=1}^{n} \alpha_i s_i(I_j)$, allowing that image to be selected. Therefore, in our experiments, we vary $\alpha$ across all possible extreme values to ensure that every possible combination of subqueries is taken into account.

\section{More Visualizations of Our Method Compared with Other Baselines}
\label{app:visualization}
More visualizations of our proposed method Cross-modal RAG compared with other baselines can be found in Fig.~\ref{app:viz_cub} to ~\ref{app:viz_wikiart}. Across all three datasets, our method achieves superior image generation capability. For the CUB in Fig.~\ref{app:viz_cub}, our Cross-modal RAG can generate realistic images that align with all subqueries, which are often ignored or distorted in GILL and UniRAG. Besides, SDXL, LaVIT, and RDM tend to generate the sketch-like images rather than photo-realistic birds. On the ImageNet-LT in Fig.~\ref{app:viz_imagenet}, retrieving relevant images plays a crucial role in generating accurate long-tailed objects. Our method successfully generates all three long-tailed objects with high visual fidelity. In contrast, none of the baselines are able to generate all three correctly - UniRAG and GILL even fail to produce a single accurate image. For WikiArt in Fig.~\ref{app:viz_wikiart}, creative image generation poses a unique challenge, as it is inherently difficult to reproduce the exact ground-truth image. However, our method explicitly retrieve the images with satisfied subqueries and can capture the particular artistic style specified in the query. As a result, all six generated images of Cross-modal RAG closely resemble the style of the target artist in the query. In contrast, other RAG baselines can not guarantee if the retrieved images grounded in the intended artist's style. RDM even suffers from low visual fidelity when generating human faces.

\begin{figure}[h]
  \begin{center}    
  \includegraphics[width=\textwidth]{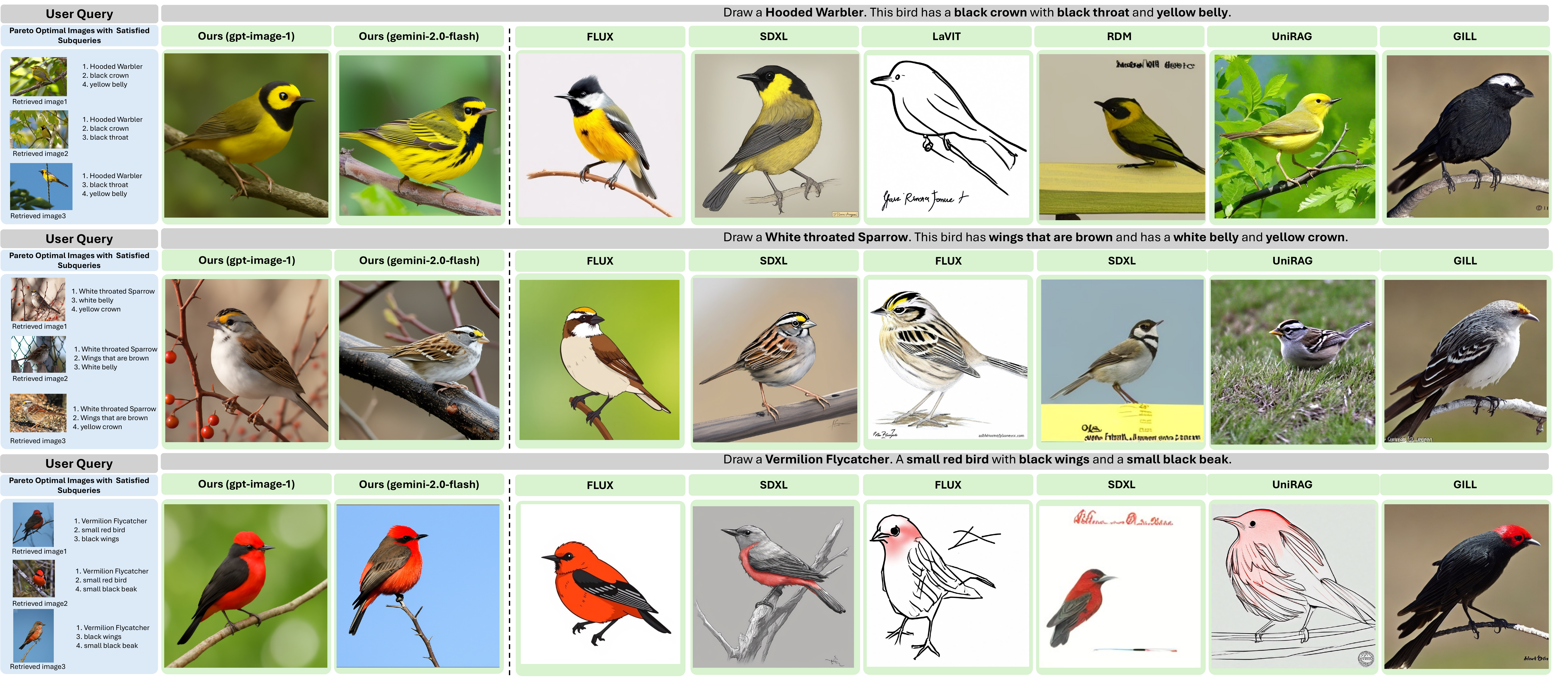}
  \end{center}
  \vspace{-6mm}
  \caption{Visualizations of generation on CUB compared with other baselines.}
  \label{app:viz_cub}
  \vspace{-6mm}
\end{figure}

\begin{figure}[h]
  \begin{center}
    \includegraphics[width=\textwidth]{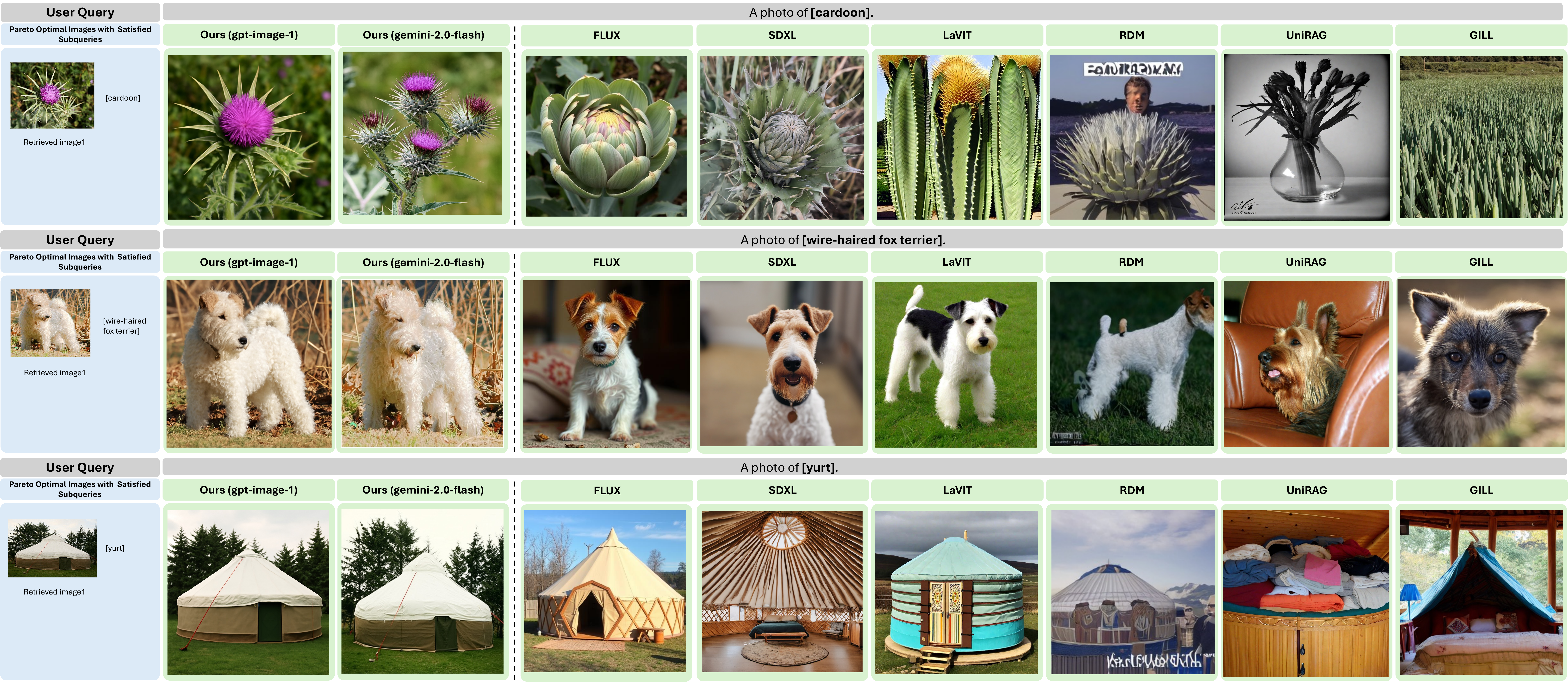}
  \end{center}
  \vspace{-3mm}
  \caption{Visualizations of generation on ImageNet-LT compared with other baselines.}
  \label{app:viz_imagenet}
\end{figure}

\begin{figure}[h]
  \begin{center}
    \includegraphics[width=\textwidth]{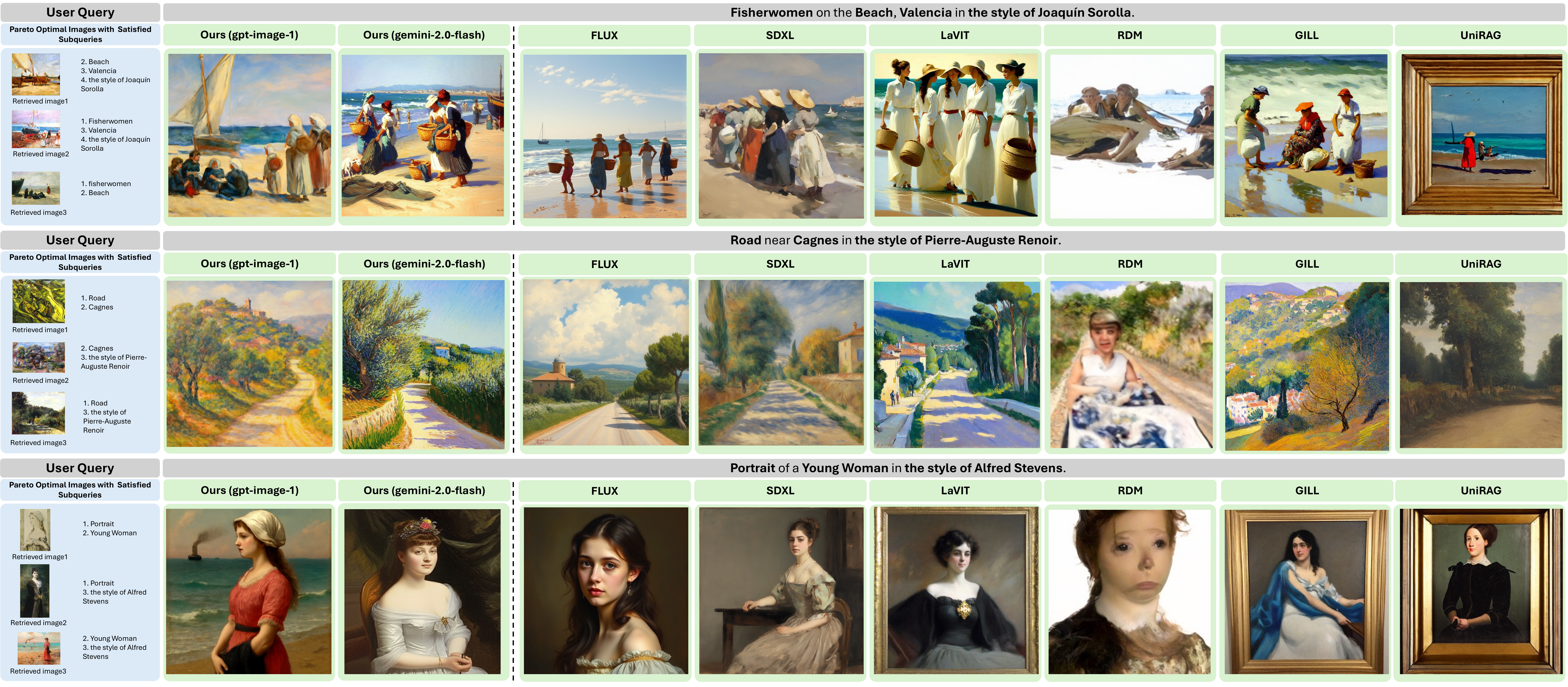}
  \end{center}
  \caption{Visualizations of generation on WikiArt compared with other baselines.}
  \label{app:viz_wikiart}
  \vspace{-5mm}
\end{figure}

\end{document}